\icmltitlerunning{Privacy Amplification by Bernoulli Sampling}
\def\calA{\mathcal{A}}
\def\calC{\mathcal{C}}
\def\barp{\overline{p}}
\def\barx{\overline{x}}
\def\bary{\overline{y}}
\def\R{\mathbb{R}}
\def\post{\textnormal{Post}}
\def\Post{\textnormal{Post}_{\calA, \alpha}(\varepsilon)}
\def\Postk{\textnormal{Post}_{\calA, \alpha,k}(\varepsilon)}
\def\Bern{\textnormal{\texttt{Bern}}}
\def\maj{\textnormal{Maj}}
\def\renyi{R_\alpha}
\def\indicator{\textbf{1}}
\newtheorem{theorem}{Theorem}
\newtheorem{lemma}{Lemma}
\newtheorem{defn}{Definition}
\def\supp{\textnormal{supp}}
\begin{document}

\twocolumn[
  \icmltitle{Privacy Amplification by Bernoulli Sampling}



\icmlsetsymbol{equal}{*}

\begin{icmlauthorlist}
\icmlauthor{Jacob Imola}{ed}
\icmlauthor{Kamalika Chaudhuri}{ed}
\end{icmlauthorlist}

\icmlaffiliation{ed}{UC San Diego}

\icmlcorrespondingauthor{Jacob Imola}{jimola@eng.ucsd.edu}

\icmlkeywords{Machine Learning, ICML}

\vskip 0.3in
]



\printAffiliationsAndNotice{\icmlEqualContribution} 

\begin{abstract}
	Balancing privacy and accuracy is a major challenge in designing
  differentially private machine learning algorithms.  One way to improve this
  tradeoff for free is to leverage the noise in common data operations that
  already use randomness. Such operations include noisy SGD and data subsampling. The additional
  noise in these operations may amplify the privacy guarantee of the overall
  algorithm, a phenomenon known as {\em{privacy amplification}}. In this paper, we 
  analyze the privacy amplification of sampling from a multidimensional
  Bernoulli distribution family given the parameter from a private
  algorithm. This setup has applications to Bayesian inference and to data
  compression. We
  provide an algorithm to compute the amplification factor, and we
  establish upper and lower bounds on this factor. 
\end{abstract}
\section{Introduction}

Differential privacy~\cite{Dwork:2014} has emerged as the gold standard for
privacy in machine learning. Informally, differential privacy ensures 
that the participation of a single person in a dataset does not change the
probability of any outcome by much. This is achieved by adding noise to a
function, such as a classifier or a generative model, computed on the
sensitive data. In order to avoid adding noise to the data as much as possible,
a trove of techniques have been developed to analyze the privacy guarantees of
combinations of black-box private algorithms. One important example of this is
privacy amplification.

Privacy amplification~\cite{Feldman:2018} considers black-box private machine learning 
algorithms which are modified by operations already involving noise. 
This is quite common in private machine learning, such as when a point is
privately selected for a gradient descent step and then future steps not
involving the point are taken, or when a dataset is randomly subsampled before
being given to a private algorithm. Because these operations involve randomness
and differential privacy guarantees tend to get stronger with more randomness,
privacy amplification occurs in the modified algorithm.
In the first example, the randomness in future
gradient steps gives us privacy amplification by iteration~\cite{Feldman:2018,
Balle:2019}, and in the second,
the randomness in sampling the data gives us privacy amplification by
subsampling~\cite{Chaudhuri:2006, 
  Smith:2008, Li:2011, Beimel:2014, Beimel:2014-2, Bun:2015, Balle:2018, 
Wang:2019, Wang:2019poisson, Mironov:2019}.

In this work, we study privacy amplification for an operation not yet
considered in the literature: sampling from a distribution family given the parameter
from a black-box private algorithm.
This operation has direct applications to Bayesian inference: suppose a private
algorithm estimates the posterior distribution, but only samples from the
posterior are released. In addition, it has applications to sparsifying the
weights of a neural network to improve computation speed~\cite{Aji:2017}. In this case,
one could amplify the privacy of a neural network training algorithm if the
neural network were sparsified with a Bernoulli sampling process.

We consider sampling from the multidimensional, independent Bernoulli distribution family
parametrized by $\theta \in [0,1]^d$. This distribution family admits relatively
simple analysis, and is rich enough 
to still be applicable to posterior sampling and to neural network sparsifying.
The question we ask is the following:
If $\theta$ is given by a black-box, differentially-private algorithm, then how does the
privacy guarantee amplify when $k$ samples drawn from $\text{Bernoulli}(\theta)$,
rather than $\theta$, are released?

We provide three answers to this question: First, in Section~\ref{sec:single},
we detail an algorithm for computing the exact privacy amplification. Unfortunately,
this algorithm has running time exponential in $d$, and thus cannot be used for
larger $d$.
Second, in Section~\ref{sec:single-bounds}, we provide closed-form upper 
and lower bounds on the privacy amplification. The bounds can approximate the 
amplification without having to run the algorithm. We provide two cases where
the bounds are a good approximation, and one of the cases is a negative result
that when $d$ is large, then little amplification occurs. Third, we plot the
actual amplification alongside our upper and lower bounds in Section~\ref{sec:valid}.
The plots support our finding that the bounds are good approximations.

\section{Preliminaries}
Our privacy amplification results apply to differentially private algorithms
processed by a Bernoulli sampling operation. In this section, we introduce the
sampling operation, R\'enyi divergence, and R\'enyi differential privacy.
We conclude with a formal problem setup using these
definitions.

\paragraph{Bernoulli Sampling Process}
We consider private algorithms which have range $\Theta \subseteq [0,1]^d$. 
We restrict the range to be in $[0,1]^d$ because we will apply a Bernoulli
post-sampling operation to the output $\theta \in \Theta$. The exact operation
is the following:

\begin{defn}(Independent Bernoulli Process)
  Let $B_k(\theta)$ denote the process which, for 
  an input $\theta = (\theta_1, \ldots, \theta_d) \in [0,1]^d$, repeats the
  following $k$ times: flip $d$ coins 
  independently with biases $(\theta_1,\ldots, \theta_d)$ and return their
  output. 
\end{defn}

Each Bernoulli distribution is independent among its $d$ coordinates which is
less general than an arbitrary distribution on $[0,1]^d$, but still complex
enough to express common post-sampling problems (see Introduction).

\paragraph{R\'enyi Divergence and R\'enyi DP}
R\'enyi differential privacy uses the familiar R\'enyi
divergence $R_\alpha(P,Q)$~\cite{Mironov:2017}. We defer a definition of this to
Appendix~\ref{app:omit-defn}, but we find it convenient to introduce the
following function $r_\alpha(p)$, a special case of the R\'enyi Divergence.
\begin{defn} ($r_\alpha(p)$):
  Let $\{x_1, x_2\} \subseteq \Theta$. Let $P,Q$ be random variables with
  support on $\{x_1, x_2\}$. Suppose $p = \Pr[P = x_1]$ and 
  $1-p = \Pr[Q = x_1]$. Define the binary, symmetric R\'enyi divergence function 
  $r_\alpha(p)$ as follows:
  \begin{multline*}
    r_\alpha(p) = R_\alpha(P,Q) \\ = \frac{1}{\alpha-1} \log \left(
    p^{\alpha}(1-p)^{1-\alpha} + (1-p)^{\alpha} p^{1-\alpha} \right)
  \end{multline*}
\end{defn}

Due to space limitations, we assume the reader is familiar with R\'enyi
differential privacy. For a full discussion, see Appendix~
\ref{app:omit-defn}. The following is a slightly non-standard way to view
R\'enyi DP which will be useful for our purposes.

\begin{defn}\label{def:eps-func} ($\varepsilon_A(\alpha)$): 
  Let $\varepsilon_A(\alpha)$ be the smallest $\varepsilon$ for which $A$
  satisfies $(\alpha, \varepsilon)$-RDP. This quantity is given by
  \[
    \varepsilon_A(\alpha) = \sup_{D \sim D'} \renyi(A(D), A(D'))
  \]
\end{defn}
Notice that $\varepsilon_A(\alpha) \leq \varepsilon$ if and
only if $A$ satisfies $(\varepsilon, \alpha)$-RDP.

R\'enyi differential privacy enjoys an important property that serves as a
trivial lower bound for privacy amplification. This property,
{\bf Post-processing invariance}~\cite{Mironov:2017}, states that if we apply a
randomized function $M : \Theta \rightarrow \Theta'$ to the output of algorithm
$A$, then
$\varepsilon_{MA}(\alpha) \leq \varepsilon_{A}(\alpha)$.

\paragraph{Problem Setup}
In deployments of privacy, data analysts have global constants
$\varepsilon, \alpha$ and must release a private algorithm $A(D)$ on a database
$D$ such that $\varepsilon_A(\alpha) \leq \varepsilon$. An RDP algorithm $A$ is
always parametrized in terms of $\varepsilon$ and $\alpha$; we find it useful to
think of $A$ as a family of private algorithms $\calA$ formed by all possible
instantiations of $A$ with certain $\alpha,\varepsilon$.

For an $A \in \calA$, we consider what 
happens when instead of releasing $A(D)$, the analyst takes an output $\theta =
A(D)$ and releases $B_k(\theta)$ instead of $\theta$.
The amplification at level $\varepsilon, \alpha$ is the worst-case RDP 
guarantee of the algorithm
$B_k(A(\cdot))$ as $A$ ranges over those $A \in \calA$ that satisfy $(\alpha,
\varepsilon)$-RDP. Formally,

\begin{defn} ($\Postk$):
  Let $B_k(\theta)$ be the operation that releases $k$ independent samples from
  $\Bern(\theta)$ for $\theta \in [0,1]^d$.
  Given a family of algorithms $\calA$ such that each algorithm has range on
  $\Theta = [0,1]^d$, the
  {\bf Bernoulli post-processing amplification function $\Postk$}
  is given by
  \begin{equation}\label{eq:post}
    \Postk = \sup_{A \in \calA, \varepsilon_A(\alpha) \leq
    \varepsilon} \varepsilon_{B_k(A)}(\alpha)
  \end{equation}
  We abbreviate $\post_{\calA, \alpha, 1}(\varepsilon)$ to $\Post$.
\end{defn}

One can see that $\Postk$ captures exactly what we mean by privacy amplification
over a class $\calA$. At a given privacy level $\varepsilon$, $\Postk$ returns
the worst-case privacy improvement over all $A \in \calA$.
Using post-processing invariance, we can derive that $\Postk \leq \varepsilon$.
The next sections explore how far away $\Postk$ is from $\varepsilon$ while
making minimal assumptions about $\calA$.

\section{Single Sample Amplification}
\label{sec:single}

In this section, we present our algorithm for computing $\Post$; i.e. $k=1$. The algorithm
greatly improves a naive search, but still has running time $2^{O(d)}$. Before
introducing our result, we must make an a light assumption about $\calA$ which
we now illustrate.

Suppose $\calA$ consists of all algorithms ranging over $\Theta = [0,1]^d$.
Then, some algorithms in $\calA$ trivially have no amplification. Specifically,
if $A \in \calA$ outputs just values in $\{0,1\}^d$, then it is not hard to see
that releasing a sample from $\Bern(\theta)$ for any $\theta \in \{0,1\}^d$ gives
$\theta$ back. This implies $\varepsilon_{B_1(A)}(\alpha) =
\varepsilon_A(\alpha)$.
At the very least, we must exclude these algorithms from $\calA$. To keep the
assumptions as light as possible, we assume $\Theta \subseteq [c,1-c]^d$
for a small constant $c < \frac{1}{2}$. 

\paragraph{Amplification Under Restricted $\Theta$}
Naively, to compute $\Post$, we could 
search over all distributions $P,Q$ on $\Theta$ subject to the 
constraints $\renyi(P,Q) \leq \varepsilon$ and 
$\renyi(Q,P) \leq \varepsilon$ to find the maximal value
$\overline{\varepsilon}$ of
$\renyi(B_1(P), B_1(Q))$. This would give the worst-case values of $P,Q$, and
hence we would have $\Post = \overline{\varepsilon}$.
We could search this space by discretizing
$\Theta$ into bins of width $\delta$ and then using $\Omega((\frac{1-2c}{\delta})^d)$ 
real-valued variables to represent the mass of $P,Q$ in each bin.

To greatly reduce this search space, we
can use a convexity argument to show that $\renyi(B_1(P), B_1(Q))$ is maximized
when $P,Q$ have support on $\{c,1-c\}^d$. We call these distributions the
$c$-corner distributions, and denote them by $\calC_d$.

The following theorem establishes that the
$P,Q$ which maximize $\renyi(B_1(P), B_1(Q))$ subject to
$\renyi(P,Q),\renyi(Q,P) \leq \varepsilon$ are a subset of the $c$-corner
distributions.

\begin{theorem}
  \label{thm:multi-bounded-amp}
  Let $\calA$ be the set of all algorithms with output on $[c, 1-c]^d$. Then, for
  all $\alpha > 1$, $\varepsilon \geq 0$,
  \begin{equation*}
    \Post =
    \sup_{\substack{ P,Q \in \calC_d \\ \renyi(P,Q),\renyi(Q,P) \leq \varepsilon
    }} \renyi(B_1(P), B_1(Q))
  \end{equation*}
\end{theorem}

This theorem gives the exact value of $\Post$; it is not an upper bound.
However, the $c$-corner distributions are a partial characterization of the
worst-case distributions since not all will have the worst-case value of
$\renyi(B_1(P), B_1(Q))$.

Computing the optimization problem of Theorem~\ref{thm:multi-bounded-amp} 
requires searching
over a much smaller space than the naive method.
We can write the problem using $2^{O(d)}$ variables, which is much better than
the naive method. Finally, each constraint and the objective are convex, so we
can use an iterative or convex method to solve the optimization problem for
smaller $d$. We illustrate this procedure in Algorithm~\ref{alg:amp-opti}. The proofs of 
correctness appear in Appendix~\ref{sec:app-algo}.

\section{Single Sample Amplification Lower and Upper Bounds}
\label{sec:single-bounds}
While Theorem~\ref{thm:multi-bounded-amp} exactly describes $\Post$, its exponential
running time in $d$ prevents it from being practical for higher values of $d$.
In this section, we sacrifice some precision and derive 
a lower and an upper bound on $\Post$ that have closed forms. To get a sense of
when the bounds are tight, we prove that the upper bound is close to $\Post$ for
two cases: when $\varepsilon$ is high and when $d$ is moderately high 
and $c,\alpha$, and $\varepsilon$ have reasonably small values. The second
result is somewhat negative as it implies $\Post \approx \varepsilon$,
meaning not much amplification holds. Nonetheless, the result improves our
understanding of what $\Post$ looks like for these cases.

\subsection{Upper and Lower Bounds for $\Post$}

Our upper bound is a minimum of two upper bounds---the first, following from
post-processing invariance~\cite{Mironov:2017}, is that $\Post \leq \varepsilon$. 
The second is an asymptotic value of~\ref{thm:multi-bounded-amp}
given by $R(B_1(\{c\}^d, B_1(\{1-c\}^d) = d
r_\alpha(c)$ (the proof of this appears in Appendix~\ref{app:proofs},
Theorem~\ref{thm:amp-asymptote-multi}). Thus, we have the following upper
bound:

\begin{equation}\label{eq:amp-ub}
  \Post \leq \min \{ \varepsilon, dr_\alpha(c) \}
\end{equation}
Notice the second part of the upper bound is linear in the dimension $d$. This
is due to the additivity of R\'enyi divergence on independent distributions of
dimension $d$.

For the lower bound, we simply plug two distributions $P,Q \in \calC^d$ into
Theorem~\ref{thm:multi-bounded-amp}, where
\begin{align}
  P &= p \indicator[X=\{c\}^d] + (1-p) \indicator[X = \{1-c\}^d] 
  \label{eq:worst-case1}\\
  Q &= (1-p) \indicator[X=\{c\}^d] + p \indicator[X = \{1-c\}^d]
  \label{eq:worst-case2}
\end{align}
and $p\leq\frac{1}{2}$. Notice $\renyi(P,Q) = \renyi(Q,P) = r_\alpha(p)$.
This results in the lower bound:
\begin{equation}\label{eq:amp-lb}
  \renyi(B_1(P),B_1(Q)) \leq \post_{\calA, \alpha}(r_\alpha(p))
\end{equation}

Unlike the situation in Theorem~\ref{thm:multi-bounded-amp}, there is a way to 
compute $\renyi(B_1(P), B_1(Q))$ efficiently
in all parameters including $d$ (see Appendix~\ref{app:proofs}, Theorem~\ref{thm:amp-lb}).
While we can't prove it here, we conjecture that~\eqref{eq:amp-lb} is actually
an equality for all $p \in (0,\frac{1}{2}]$. Proving this would give a full
characterization of distributions $P,Q$ which provide the worst amplification
(see remarks after Theorem~\ref{thm:multi-bounded-amp} ).

\subsection{Cases Where Bounds Are Close}
\label{sec:ub-regime}
We discuss two cases where the upper and lower bounds of the last section
are close, and thus they give a good approximation for $\Post$. First, one can verify
that the lower bound~\eqref{eq:amp-lb} actually approaches the
asymptote in the upper bound~\eqref{eq:amp-ub}. This means they grow very close
for large $\varepsilon$. Second, when
when the dimension $d$ is large, then points $\{c\}^d$ and $\{1-c\}^d$ barely mix under 
the Bernoulli
sampling operation. Thus, if $P,Q$ have support on just these two points, then 
$B_1(P)$ and $B_1(Q)$ will be 
about as hard to distinguish as $P$ and $Q$. This implies $\Post
\approx \varepsilon$. This idea gives us the following:

\begin{theorem}
  \label{thm:multi-lower-bound}
  Let $p \in (0, \frac{1}{2}]$, let $P = p \textbf{1}[X = \{c\}^d] +
  (1-p)\textbf{1}[X = \{1-c\}^d]$ and $Q$ be the same distribution with $p$
  replaced by $1-p$. Let $K = e^{-2(1/2-c)^2d}$.
  If $p + K \leq \frac{1}{2}$, then $\renyi(B_1(P), B_1(Q)) \geq r_\alpha(p
  + K)$.
\end{theorem}

For all $p \in (0, \frac{1}{2}]$, if $K$ satisfies the theorem statement,
then Theorem~\ref{thm:multi-lower-bound}, combined with~\eqref{eq:amp-ub}
and~\eqref{eq:amp-lb}, gives
\[
  r_\alpha(p+K) \leq R_\alpha(B_1(P), B_1(Q)) \leq
  \post_{\calA,\alpha}(r_\alpha(p)) \leq r_\alpha(p).
\]
Thus, the upper and lower bounds on $\Post$ are close whenever 
$K$ is much smaller than $p$, as this will imply $r_\alpha(p+K)$ and
$r_\alpha(p)$ are close. Since $K$ decays exponentially with
$2(\frac{1}{2}-c)^2d$, for moderate values of $d$, $K$ will be extremely small
compared to $p$. The exceptions to this rule are when $p$ is very small (meaning
$r_\alpha(p)$ is large) or when $c$ is very close to $\frac{1}{2}$. In summary,
we've shown that $\Post \approx \varepsilon$ when (1) $d$ is at least moderate
in value, (2) $c$ is not too far from $\frac{1}{2}$, and (3) $\varepsilon$ is
not too large. In Section~\ref{sec:valid}, we empirically validate this rule.

\section{Validation}\label{sec:valid}

Recall for large $d,k$, our understanding of $\Post$ is limited to our upper
and lower bounds~\eqref{eq:amp-ub} and~\eqref{eq:amp-lb} due 
to the exponential running time of our algorithms.
Theorem~\ref{thm:multi-lower-bound} led us to the conclusion that when $d$ is
moderate in value and $c,\alpha$ are reasonable, then $\Post \approx
\varepsilon$. In this Section, we validate this last statement and the closeness 
of our bounds empirically.

{\bf Setup.} For $k=1$, we plot the following functions of $\varepsilon$: $\Post$, 
the lower bound~\eqref{eq:amp-lb} (labeled LB) and
both parts of the upper bound $\min\{\varepsilon,
dr_\alpha(c)\}$~\eqref{eq:amp-ub} (labeled PPI
and Asympt.). We produce a
plot for $d \in \{1,2,3,5,15\}$, $c \in \{0.01, 0.10, 0.30\}$, and $\alpha \in
\{5, 50\}$. For $d > 3$, we do not plot $\Post$ due to the runtime of
Algorithm~\ref{alg:amp-opti}. The plots for $\alpha = 50$, appear in
Figure~\ref{fig:dim-plot}. The plots for $\alpha =
5$ are in Appendix~\ref{app:graphs}.
For $k > 1$, we plot $\Post$, the lower bound~\eqref{eq:amp-lb} (labeled
LB), and both parts of the upper bound $\min\{ \varepsilon,
dkr_\alpha(c)\}$~\eqref{eq:amp-lb} (labeled PPI and Asympt.).
We fix $c = 0.1$ and $\alpha = 50$ 
and vary $k \in \{1,2,4\}$ and $d \in \{1,3,5\}$.
These plots appear in Appendix~\ref{app:graphs}. Due to numerical
instability, some of the plotted values do not extend across the whole 
$\varepsilon$ domain.

{\bf Results.}
The plots for $\alpha = 5$ and $\alpha = 50$ look almost identical, so the
following conclusions hold for both values. For all plots, there are three
notable Regimes for $\varepsilon$: In Regime I, $\Post$ is close to
$\varepsilon$. As predicted by Theorem~\ref{thm:multi-lower-bound}, Regime I
lasts from a very small value of $\varepsilon$ to a moderate value, and lasts
longer with smaller $c$ or larger $d$. Often
Regime I lasts until $\varepsilon$ is quite large. Only when $(d,k,c) \in
\{(\leq 5, 1, 0.3), (\leq 2, 1, 0.1), (1, 1, 0.01), (1, 2, 0.1)\}$ does Regime I 
end before
$\varepsilon = 5$ which we consider to be an upper limit for a reasonable 
$\varepsilon$. That Regime I generally lasts until high values of $\varepsilon$
supports our claim that $\Post \approx \varepsilon$ for many reasonable values
of $c,\alpha$, and $d$.

In Regime II, $\Post$ flattens out. The gap
between the upper and lower bounds is highest, but still never higher than about
$1.5$. The gap between lower and upper bound is never very high, and this
supports that these bounds are a good approximation to $\Post$.

In Regime III, $\Post$ has converged to its asymptote of
$dkr_\alpha(c)$~\eqref{eq:amp-ub}; Regime III starts soon after Regime I ends
due to Regime II being relatively short. In this Regime, the upper bound is very
close to $\Post$.

Finally, $\Post$ and the lower bound~\eqref{eq:amp-lb} are always
visually equal on all our plots. This suggests that $\Post$ is actually equal
to ~\eqref{eq:amp-lb}, though we do not yet have proof. If true, it would
give an efficient method for computing $\Post$.

\begin{figure}[ht]
    \centering
    \includegraphics[scale=0.7]{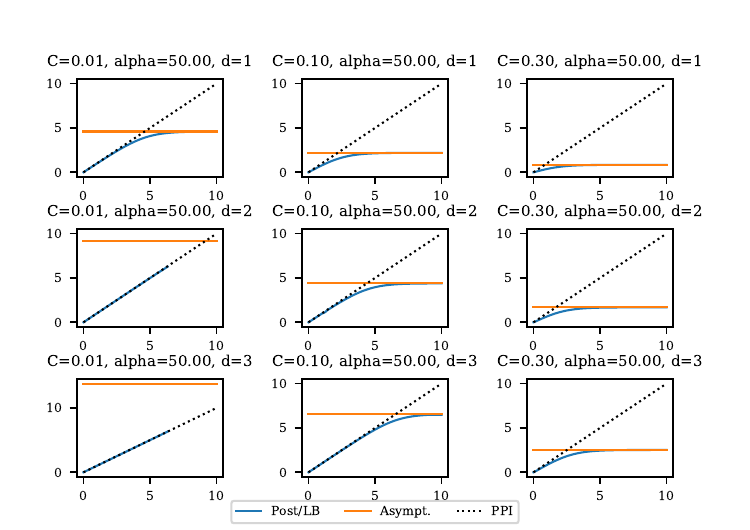}

    \includegraphics[scale=0.7]{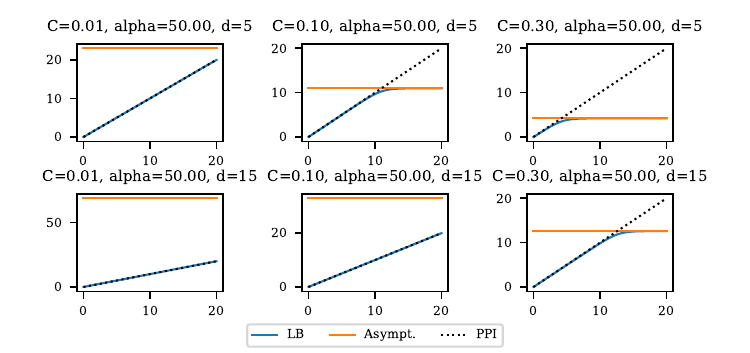}
  \vskip -0.1in
  \caption{Comparison of $\Post$, lower bound~\eqref{eq:amp-lb} (LB),
  upper bound~\eqref{eq:amp-ub} (PPI and Asympt.) for $\alpha=50$.}
    \label{fig:dim-plot}

  \vskip -0.1in
  \end{figure}

\section{Conclusion}
We initiate the study of privacy amplification for a new operation: sampling
from a Bernoulli distribution given the parameter from a private algorithm.
Given the dimension of the distribution, privacy level, and number of samples,
we characterize the amount of amplification that occurs. 
For future work, one might consider a different
sampling procedure or making a stronger assumption on the private algorithm to
enable the possibility of more amplification. Proving that privacy amplification
occurs would allow one to strengthen many privacy guarantees for free.

\bibliography{ms}
\bibliographystyle{icml2021}
\appendix
\section*{Overview}
Appendix~\ref{app:omit-defn} contains details on our preliminary
definitions including R\'enyi differential privacy.
In Appendix~\ref{sec:multi-sample}, we describe the privacy amplification
algorithm for general $k$, and produce lower bounds and upper bounds similar to
the $k=1$ case.
In Appendix~\ref{app:graphs}, we include plots that were omitted from
Section~\ref{sec:valid}. Appendix~\ref{sec:related} contains related works.
Finally, Appendix~\ref{sec:app-algo} contains our algorithms for computing
amplification, and Appendix~\ref{app:proofs} contains all other omitted proofs.

\section{Omitted Definitions}\label{app:omit-defn}

Since we will be working with R\'enyi differential privacy, we introduce the 
R\'enyi divergence and important related concepts.
We assume throughout that $\alpha \in \R$ is a parameter bigger that $1$ and that
$\varepsilon \in \R$ is a parameter bigger than $0$, even if 
we don't state this explicitly. The R\'enyi divergence is defined as:
\begin{defn} ($\renyi(P,Q)$):
For distributions $P,Q$ on $\Theta$, the R\'enyi
divergence of order $\alpha$ between $P$ and $Q$ denoted $\renyi(P,Q)$, is
  $\renyi(P,Q) = \frac{1}{\alpha-1} \log\left( \int_{\Theta} dP^{\alpha} 
  dQ^{1-\alpha} \right)$
\end{defn}

The R\'enyi divergence can be thought of as a ``distance'' between
distributions, though it does not satisfy the properties of a usual metric.
Still, it enjoys many useful properties
such as quasi-convexity, independent composition, and being related to
$f$-divergences~\cite{erven:2012,Cichocki:2010}. One very important property 
is the {\bf data processing inequality}~\cite{erven:2012}.  This property states that if two
distributions $P,Q$ on $\Theta$ are post-processed by a randomized function $M :
\Theta \rightarrow \Theta'$, then
$\renyi(P,Q) \geq \renyi(MP, MQ)$.

\paragraph{R\'enyi Differential Privacy}
R\'enyi differential privacy (RDP)~\cite{Mironov:2017} is an alternative definition 
of differential privacy~\cite{Dwork:2006} that has enjoyed a surge of recent interest
because of tighter analyses for common operations such as
subsampling~\cite{Wang:2019} and composition~\cite{Mironov:2017}. Because of its
improvements over standard differential privacy, we use RDP as our privacy definition of
choice in this paper. RDP is defined using a binary, symmetric relation $\sim$
on databases. For two databases $D,D'$, we say $D \sim D'$ if $D$ and $D'$ differ 
in just one record.

\begin{defn} ($(\alpha, \varepsilon)$-RDP):
  Let $A$ be a randomized algorithm with range $\Theta$, and let $\alpha \in \R$ be a
  parameter bigger than $1$. $A$
  satisfies $(\alpha, \varepsilon)$-RDP if $\sup_{D \sim D'} 
  \renyi(A(D), A(D')) \leq \varepsilon$.
\end{defn}
The goal of RDP is to hide the contribution of a single record on the output of
$A$ by making $A(D)$ and $A(D')$ indistinguishable. The R\'enyi divergence
measures the distinguishability of $A(D)$ and $A(D')$. The parameter 
$\varepsilon$ gives the strength of the RDP guarantee, since it is an upper
bound on the distinguishability level.

One annoyance with the above definition that we will need to correct is that
$\varepsilon$ is an upper bound on the privacy guarantee: if $A$ satisfies
$(\alpha, \varepsilon)$-RDP, then $A$ satisfies $(\alpha, \varepsilon')$-RDP for
all $\varepsilon' > \varepsilon$. We are interested in the tightest possible
privacy guarantee for the post-sampling operation, and thus we introduce a
function that gives the smallest $\varepsilon$ such that an algorithm $A$
satisfies $(\alpha, \epsilon)$-RDP. This leads us to
Definition~\ref{def:eps-func}.

\section{The Amplification Picture for Multiple
Samples}\label{sec:multi-sample}
We follow a similar setup to Sections~\ref{sec:single}
and~\ref{sec:single-bounds} but analyze $\Postk$ for a general $k$.
While on the surface this problem resembles $k$-fold composition of
GDP~\cite{Mironov:2017},
this is actually incompatible because the private algorithm is run only once,
and the output is used to release $k$ samples. Thus, we expect the amplification
to be lessened as the $k$ samples paint a better picture of the original
output. In this section,
we give $k$ sample analogues to results in
Sections~\ref{sec:single} and ~\ref{sec:single-bounds} that follow a similar
pattern: we give an algorithm
for computing $\Postk$ exactly. Then, we derive upper and
lower bounds on $\Postk$ that are efficiently computable. These bounds are close
when $\varepsilon$ is high, and it is still true that $\Postk \approx
\varepsilon$ when $d$ is moderately high and $c,\alpha$, and $\varepsilon$ are
reasonably small.

While it is more difficult to visualize than when $k=1$, the $c$-corner 
distributions are once again the worst distributions that an algorithm in
$\calA$ can output. The reason is the same if $P,Q$ have mass on a point not in $\{c,
1-c\}^d$, then moving the mass at that point to a point in $\{c, 1-c\}^d$
can only decrease $\renyi(P,Q)$ but increases $\renyi(B_k(P), B_k(Q))$ (see
Section~\ref{sec:single} for more details).
The following theorem formalizes this intuition:
\begin{theorem}
  \label{thm:multi-bounded-amp-ms}
  Let $\calA$ be the set of all algorithms with output on $[c, 1-c]^d$. Then, for
  all $\alpha > 1$, $\varepsilon \geq 0$,
  \begin{equation*}
    \Postk =
    \sup_{\substack{ P,Q \in \calC_d \\ \renyi(P,Q),\renyi(Q,P) \leq \varepsilon 
    }} \renyi(B_k(P), B_k(Q))
  \end{equation*}
\end{theorem}

This implies we can compute $\Post$ by just changing the optimization function
of Algorithm~\ref{alg:amp-opti}. The resulting algorithm is
Algorithm~\ref{alg:amp-opti-general} which appears in
Appendix~\ref{sec:app-algo}. We remark the algorithm
uses $2^{O(d)}$ variables and takes $2^{O(kd)}$
time to compute, so it is not efficient in either $d$ or $k$. Improving matters
somewhat, when $d = 1$, we can
take advantage of independence and symmetry to reduce the runtime to $O(k)$. 
This result is Algorithm~\ref{alg:amp-opti-multi}. The proofs of correctness of
these algorithms lies in Appendix~\ref{sec:app-algo}.

\begin{algorithm}[tb]
  \caption{Algorithm for computing $\Postk$ when $\calA$ consists of algorithms
    with range on $[c, 1-c]$ and $k$ is general. }
  \label{alg:amp-opti-multi}
  \begin{algorithmic}
    \STATE {\bfseries Input:} Constant $c$, privacy parameter $\varepsilon$, order $\alpha$, no.
    samples $k$

  \STATE $Constraint_1 \leftarrow r_\alpha(x,y) $
  \STATE $Constraint_2 \leftarrow r_\alpha(y,x) $
  \FOR{$0 \leq j \leq k$}
    \STATE $\barx_j \leftarrow x c^j + (1-x) (1-c)^{k-j}$
    \STATE $\bary_j \leftarrow y c^j + (1-y) (1-c)^{k-j}$
  \ENDFOR 
  \STATE $Objective \leftarrow
  \sum_{j=0}^k { {k} \choose {j} } (\barx_j / \bary_j)^\alpha \bary_j$
  \STATE $MaxVal \leftarrow maximize(Objective)$ \textbf{ subject to } 
  \STATE $Constraint_1 \leq e^{(\alpha-1)\varepsilon}$
  \STATE $Constraint_2 \leq e^{(\alpha-1)\varepsilon}$
  \STATE $0 \leq x \leq 1, 0 \leq y \leq 1$
  \STATE \textbf{Return } $\frac{1}{\alpha-1}\log(MaxVal)$
\end{algorithmic}
\end{algorithm}

To upper bound $\Postk$, we can use the post-processing upper bound and the
following asymptotic value of $\Postk$, which is similar to that
of~\eqref{eq:amp-ub} (proof appears in
Appendix~\ref{app:proofs}, Theorem~\ref{thm:amp-asymptote-multi}).
\begin{equation}
  \label{eq:amp-ub-multi}
  \Postk \leq \min \{ \varepsilon, dkr_\alpha(c) \}
\end{equation}
For our $k$ sample lower bound,
we use the two distributions $P,Q$
defined in~\eqref{eq:worst-case1}
and~\eqref{eq:worst-case2} for the single sample lower bound~\eqref{eq:amp-lb}.
\begin{equation}\label{eq:amp-lb-multi}
  \renyi(B_k(P), B_k(Q)) \leq \post_{\calA, \alpha, k}(r_\alpha(p))
\end{equation}

In the $k$-sample setting, using these specific $P,Q$ for our lower bound 
is quite nice because there is an efficient way to compute 
$R_\alpha(B_k(P), B_k(Q))$
(Appendix~\ref{app:proofs}, Theorem~\ref{thm:amp-lb}) which is not obvious.
Thus, when $d=1$, it is possible to compute the amplification exactly.
Once again, we conjecture that equality holds
in~\eqref{eq:amp-lb-multi}.
We can compute a range for $\Postk$ efficiently using
~\eqref{eq:amp-ub-multi} and~\eqref{eq:amp-lb-multi}.
While we do not generalize Theorem~\ref{thm:multi-lower-bound}, we note that the
values of $\varepsilon, c,d$, and $\alpha$ that result in
$\Post \approx \varepsilon$ also result in $\Postk \approx \varepsilon$ because 
$\varepsilon \geq \Postk \geq \Post$.

\section{Supplementary Graphs}
\label{app:graphs}

The Supplementary graphs appear in Figures~\ref{fig:large-dim-plot}
and~\ref{fig:multi-sample-plot}. These plots lead to conclusions similar to
those drawn in Section~\ref{sec:valid}.

\begin{figure}[h]
    \centering
    \includegraphics[scale=0.7]{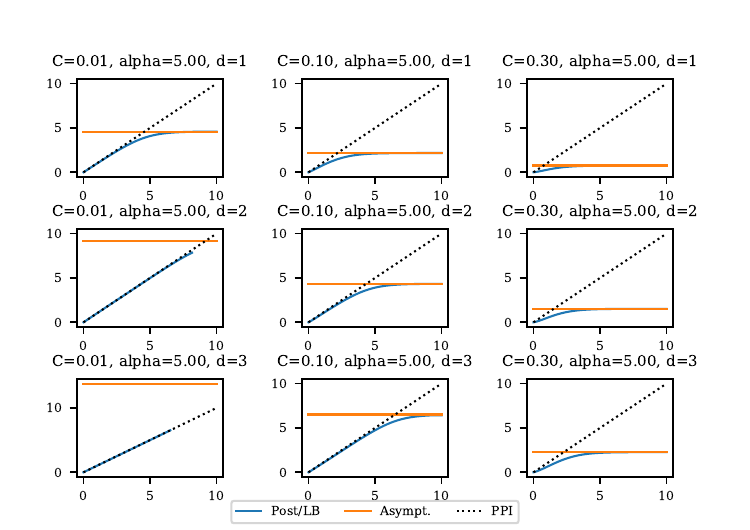}
    \includegraphics[scale=0.7]{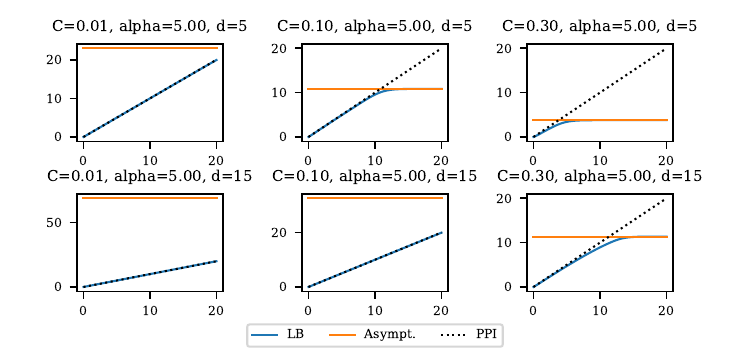}
    \caption{Comparison of $\Post$, lower bound~\eqref{eq:amp-lb} (LB),
    upper bound~\eqref{eq:amp-ub} (PPI and Asympt.) for $\alpha=5$.}
    \label{fig:large-dim-plot}
\end{figure}

\begin{figure}[h]
  \centering
  \includegraphics[scale=0.8]{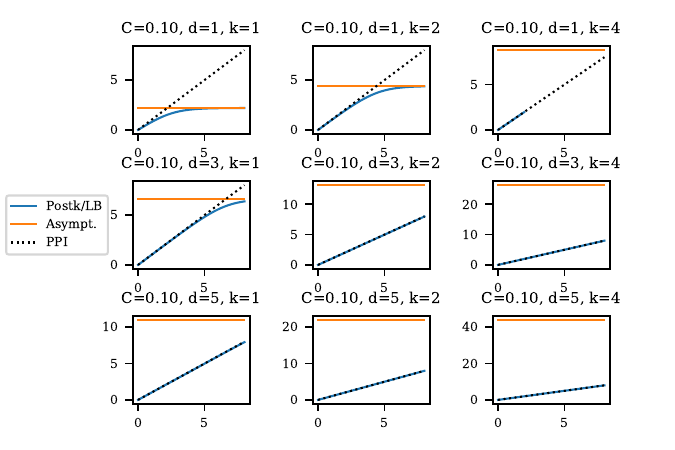}
  \caption{Comparison of $\Postk$, lower bound~\eqref{eq:amp-lb-multi} (LB),
    upper bound~\eqref{eq:amp-ub-multi} (PPI and Asympt.) for $\alpha=5$ and
  $d=1$.}
  \label{fig:multi-sample-plot}
\end{figure}

\section{Related Work}\label{sec:related}

The literature
on privacy amplification can be organized by the type of post-processing
mechanism under consideration. First, amplification by iteration~\cite{Feldman:2018} 
or diffusion~\cite{Balle:2019} occurs through repeated applications of Markov
operators to the output of a private algorithm. While Bernoulli post-processing
is a Markov process, the fact that we focus on this specific problem allows us
to obtain a tighter result.
Second, amplification by subsampling~\cite{Chaudhuri:2006, Smith:2008,
Li:2011, Beimel:2014, Beimel:2014-2, Bun:2015, Balle:2018, Wang:2019, Wang:2019poisson,
Mironov:2019} occurs when a sample from the dataset is input to the private
algorithm, rather than the whole dataset itself. Clearly, this problem
setup is different from ours. Third, privacy amplification by
shuffling~\cite{Erlingsson:2018,Cheu:2018,Balle:2019-shuffle} occurs when
differential privacy guarantees are strengthened under a different privacy
model, the shuffle model. We do not consider the shuffle model here, but it
would be an interesting area of future research.

As Bernoulli post-processing is related to posterior sampling, the Bayesian 
machine learning literature has considered problems related to ours.
Specifically, there are results about the differential privacy guarantees of
posterior sampling under different classes of prior-posterior
distribution families~\cite{rubinstein:2014, Wang:2015, Geumlek:2016,
Geumlek:2017}. Our method takes these results one step further by considering
the amplification when samples from the private posterior, rather
the private posterior itself, are released.

Our work is also related to the literature on strong data processing inequalities
in the information theory community~\cite{Raginsky:2014}. However, the difference 
is that those results apply to $f$-divergences and general processes while
we focus on R\'enyi divergences and the Bernoulli 
post-processing mechanism.

\section{Omitted Algorithms, Proof of Algorithm Correctness}
\label{sec:app-algo}

\begin{algorithm}[tb]
  \caption{Algorithm for computing $\Post$ when $\calA$ consists of algorithms
  with range on $[c, 1-c]^d$. $\Delta(x,y)$ is the Hamming distance between
  $x,y$.}
  \label{alg:amp-opti}
  \begin{algorithmic}
    \STATE {\bfseries Input:} Constant $c$, privacy parameter $\varepsilon$,
    order $\alpha$, dimension $d$
    \STATE $Constraint_1 \leftarrow \sum_{i \in \{0,1\}^d} (x_i / y_i)^\alpha
    y_i $
    \STATE $Constraint_2 \leftarrow \sum_{i \in \{0,1\}^d} (y_i / x_i)^\alpha x_i $
    \FOR{$b \in \{0,1\}^d$}
    \STATE $\barx_{b} \leftarrow \sum_{i \in \{0,1\}^d} x_i
    c^{\Delta(i, b)}(1-c)^{d-\Delta(i, b)}$
    \STATE $\bary_{b} \leftarrow \sum_{i \in \{0,1\}^d} y_i
    c^{\Delta(i, b)}(1-c)^{d-\Delta(i, b)}$
    \ENDFOR
    \STATE $Objective \leftarrow \sum_{b \in \{0,1\}^d} (\barx_{b} / 
      \bary_{b})^\alpha \bary_{b} $
    \STATE $MaxVal \leftarrow maximize(Objective)$ \textbf{ subject to } 
    \STATE \qquad $Constraint_1 \leq e^{(\alpha-1)\varepsilon}$
    \STATE \qquad $Constraint_2 \leq e^{(\alpha-1)\varepsilon}$
    \STATE \qquad $\{x_i \geq 0 : 0 \leq i \leq 2^d\}$ 
    \STATE \qquad $\{y_i \geq 0 : 0 \leq i \leq 2^d\}$
    \STATE \qquad $\sum_{i=0}^{2^d-1} x_i = 1$
    \STATE \qquad $\sum_{i=0}^{2^d-1} y_i = 1)$
    \STATE \textbf{return} $\frac{1}{\alpha-1}\log(MaxVal)$
\end{algorithmic}
\end{algorithm}

\begin{algorithm}[tb]
  \caption{Algorithm for computing $\Postk$ when $\calA$ consists of algorithms
    with range on $[c, 1-c]^d$. $\Delta(x,y)$ is the Hamming distance.}
  \label{alg:amp-opti-general}
\begin{algorithmic}
  \STATE {\bfseries Input: } Constant $c$, privacy parameter $\varepsilon$,
  order $\alpha$, dimension $d$, no. samples $k$.
  \STATE $Constraint_1 \leftarrow \sum_{i \in \{0,1\}^d} (x_i / y_i)^\alpha y_i $
  \STATE $Constraint_2 \leftarrow \sum_{i \in \{0,1\}^d} (y_i / x_i)^\alpha x_i $
  \FOR{$(b_1, \ldots, b_k) \in (\{0,1\}^d)^k$}
    \STATE $\barx_{b_1, \ldots, b_k} \leftarrow \sum_{i \in \{0,1\}^d} x_i
    \prod_{j=1}^k c^{\Delta(i, b_j)}(1-c)^{d-\Delta(i, b_j)}$
    \STATE $\bary_{b_1, \ldots, b_k} \leftarrow \sum_{i \in \{0,1\}^d} y_i
    \prod_{j=1}^k c^{\Delta(i, b_j)}(1-c)^{d-\Delta(i, b_j)}$
  \ENDFOR
  \STATE $Objective \leftarrow
  \sum_{(b_1, \ldots, b_k) \in (\{0,1\}^d)^k} (\barx_{b_1, \ldots, b_k} / 
  \bary_{b_1, \ldots, b_k})^\alpha \bary_{b_1, \ldots, b_k} $
  \STATE
  $MaxVal \leftarrow maximize(Objective)$ {\textbf{ subject to }} 
  \STATE \qquad $Constraint_1 \leq e^{(\alpha-1)\varepsilon}$
  \STATE \qquad $Constraint_2 \leq e^{(\alpha-1)\varepsilon}$
  \STATE \qquad $\{x_i \geq 0 : 0 \leq i \leq 2^d-1\}$
  \STATE \qquad $\{y_i \geq 0 : 0 \leq i \leq 2^d-1\}$
  \STATE \qquad $\sum_{i=0}^{2^d-1} x_i = 1$
  \STATE \qquad $\sum_{i=0}^{2^d-1} y_i = 1$
    \STATE \textbf{return} $\frac{1}{\alpha-1}\log(MaxVal)$
\end{algorithmic}
\end{algorithm}

Recall that $\Delta(x,y)$ is the Hamming distance between binary strings $x,y$.
Algorithm~\ref{alg:amp-opti-general} shows the fully general algorithm computing
$\Post$ given $d,k,c,\varepsilon$, and $\alpha$. We prove the algorithm computes
\begin{equation*}
    \sup_{\substack{ P,Q \in \calC_d \\ \renyi(P,Q) \leq \varepsilon \\ \renyi(Q,P)
    \leq \varepsilon}} \renyi(B_k(P), B_k(Q))
\end{equation*}
which is equal to $\Postk$, by Theorem~\ref{thm:multi-bounded-amp-ms}.
Let $z_i$ for $i = (i_1, \ldots, i_d) \in \{0,1\}^d$ be the point $\{c^{i_1}
(1-c)^{1-i_i}, \ldots, c^{i_d}(1-c)^{1-i_d}\}$.
Let $\{x_i : i \in \{0,1\}^d \}$ represent the mass that $P$ places at $z_i$, and let
  $\{ y_i : i \in \{0,1\}^d \}$ be defined similarly for $Q$. Plugging 
directly into the
definition of definition of R\'enyi divergence, $Constraint_1$ and
$Constraint_2$ represent the constraints $\renyi(P,Q) \leq \varepsilon$ and
$\renyi(Q,P) \leq \varepsilon$. The function
\[
  \sum_{i \in \{0,1\}^d} \left(\frac{x_i}{y_i}\right)^\alpha y_i
\]
is an $f$-divergence, and is therefore a convex function~\cite{Cichocki:2010}.
The constraints that the $x_i$ and $y_i$ form two probability distributions are
obvious and are included in the last line of the algorithm. The only step we
have left is to express the objective function $\renyi(B_k(P), B_k(Q))$.
For $b_1, \ldots,b_k$, each in $\{0,1\}^d$ we have
\begin{align*}
  &\Pr[B_k(P) = (b_1, \ldots, b_k)] \\ &\qquad = \sum_{i \in \{0,1\}^d} \Pr[P = z_i]
  \Pr[B_k(z_i) = (b_1, \ldots, b_k)] \\
  &\qquad = \sum_{i \in \{0,1\}^d } x_i\prod_{j=1}^k \Pr[B(z_i) = b_j] \\
  &\qquad = \sum_{i \in \{0,1\}^d } x_i
  \prod_{j=1}^k c^{\Delta(i, b_j)}(1-c)^{d-\Delta(i, b_j)}
\end{align*}
where the first equality follows from conditioning on the value of $P$, the
second from independence of the $k$ samples, and the third from independence of
the $d$ coordinates.
A similar formula holds for $Q$.
It suffices to optimize, with $\textbf{b} = (b_1, \ldots, b_k)$,
\begin{multline}\label{eq:renyi-opti}
  e^{(\alpha-1)\renyi(B_k(P), B_k(Q))} = \\ 
  \sum_{b_1, \ldots, b_k \in \{0,1\}^d}
  \Pr[B_k(P) = \textbf{b}]^\alpha \Pr[B_k(Q) = \textbf{b}]^{1-\alpha}
\end{multline}
The algorithm has these equations exactly, but it uses the variables 
\begin{align*}
\barx_{b_1, \ldots, b_k} &=
\Pr[B_k(P) = (b_1, \ldots, b_k)] \\
\bary_{b_1, \ldots, b_k} &= \Pr[B_k(Q) = (b_1, \ldots, b_k)]
\end{align*}
The objective function is convex in the variables $\barx_{b_1, \ldots, b_k}$ and
$\bary_{b_1, \ldots, b_k}$, and each $\barx_{b_1, \ldots, b_k}$ and $\bary_{b_1,
\ldots, b_k}$ is an affine transformation of the input variables $x_i$ and
$y_i$. Therefore, Algorithm~\ref{alg:amp-opti-general} is a convex optimization problem.
The runtime bottleneck is in evaluating the objective function which takes 
$2^{O(dk)}$ time. 

Algorithm~\ref{alg:amp-opti} follows from specializing
Algorithm~\ref{alg:amp-opti-general} to $k=1$.
When $d=1$, there is more simplification. For $\textbf{b} = (b_1, \ldots, b_k)$
in $\{0,1\}^k$,
\begin{align*}
  \Pr[B_k(P) = \textbf{b}] 
  &= \sum_{i \in \{0,1\} } x_i
  \prod_{j=1}^k c^{\Delta(i, b_j)}(1-c)^{1-\Delta(i, b_j)} \\
  &= x_0 c^{\sum b_j} (1-c)^{k-\sum b_j} + x_1 c^{k-\sum b_j} (1-c)^{\sum b_j}
\end{align*}
This only depends on the number of $0s$ in $(b_1, \ldots, b_k)$.
Summing~(\ref{eq:renyi-opti}) over the number of possible zeros in $b_1, \ldots,
b_k$, we get
\begin{multline*}
  e^{(\alpha-1)\renyi(B_k(P), B_k(Q))} = \\
  \sum_{j=0}^k { {k} \choose {j} } \big( (x_0 c^j (1-c)^{k-j} + x_1 c^{k-j}
  (1-c)^j)^\alpha \\ (y_0 c^{j} (1-c)^{k-j} + y_1 c^{k-j}
(1-c)^{j})^{1-\alpha}\big)
\end{multline*}

Algorithm~\ref{alg:amp-opti-multi} follows by using the variable $(x, 1-x)$ in place
of $x_0,x_1$ and changing the other constraints slightly.
\section{Proofs}
\label{app:proofs}

\begin{proof}
  (Of Theorem~\ref{thm:multi-bounded-amp}, \ref{thm:multi-bounded-amp-ms})
  Recall that
  \[
    \Postk = \sup_{A \in \calA, \varepsilon_A(\alpha) \leq \varepsilon}
    \varepsilon_{B_k(A)}(\alpha)
  \]
  For a fixed $\calA$ such that $\varepsilon_A(\alpha) \leq \varepsilon$, we
  have, where $\supp(P)$ is the support of distribution $P$,
  \[
    \varepsilon_{B_k(A)}(\alpha) \leq \sup_{\substack{ \supp(P), \supp(Q)
    \subseteq \Theta \\ \renyi(P,Q) \leq \varepsilon \\ \renyi(Q,P) \leq
    \varepsilon }} \renyi(B_k(P), B_k(Q))
  \]
  This is because an $A \in \calA$ always releases $P$ and $Q$ supported on $\Theta$ such
  that $\renyi(P,Q)$ and $\renyi(Q,P)$ are less than $\varepsilon$, meaning that
  $\varepsilon_{B_k(A)}(\alpha)$ is less than the $\sup$ above. 

  A weakening of Lemma~\ref{lem:discretization} tells us that for any $P$,$Q$,
  we can actually find $c$-corner distributions $P',Q'$ such that
  $\renyi(P',Q') \leq \renyi(P,Q)$, $\renyi(Q',P') \leq \renyi(Q,P)$, and
  $\renyi(B_k(P'), B_k(Q')) = \renyi(B_k(P), B(Q))$. Therefore,
  \begin{multline*}
    \sup_{\substack{ \supp(P), \supp(Q)
    \subseteq \Theta \\ \renyi(P,Q) \leq \varepsilon \\ \renyi(Q,P) \leq
    \varepsilon }} \renyi(B_k(P), B_k(Q)) = \\
    \sup_{\substack{ P,Q \in \calC_d 
    \\ \renyi(P,Q) \leq \varepsilon \\ \renyi(Q,P) \leq
    \varepsilon }} \renyi(B_k(P), B_k(Q))
  \end{multline*}
  
  This means $\varepsilon_{B_k(A)}(\alpha)$ is upper bounded by the $\sup$ 
  above for all $A \in \calA$. However, there is an $A \in \calA$ for which
  equality holds: If $P,Q \in \calC_d$ maximize the $\sup$ above, then $A$ just
  releases $A(D) = P$ and $A(D') = Q$ for two neighboring databases $D,D'$.
\end{proof}

\begin{lemma}\label{lem:discretization}
  Let the space $\Theta = \prod_{i=1}^d (c_i, d_i)$ for $0 < c_i \leq d_i < 1$.
  Let $\Delta = \prod_{i=1}^d \{c_i, d_i\}$. Let $P,Q$ be distributions on
  $\Theta$. Then, there exist distributions $P', Q'$ on $\Delta$ such that 
  $\renyi(P', Q') \leq \renyi(P,Q)$, $\renyi(Q', P') \leq \renyi(Q, P)$, and
  $\renyi(B_k(P), B_k(Q)) = \renyi(B_k(P'), B_k(Q'))$.
\end{lemma}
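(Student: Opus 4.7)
The plan is to construct $P'$ and $Q'$ by applying a coordinate-wise mean-preserving Markov kernel to $P$ and $Q$. For each coordinate $i$, define the kernel $K_i : (c_i, d_i) \to \{c_i, d_i\}$ by $K_i(\theta_i, c_i) = (d_i - \theta_i)/(d_i - c_i)$ and $K_i(\theta_i, d_i) = (\theta_i - c_i)/(d_i - c_i)$, which has conditional mean $\theta_i$. The product kernel $K(\theta, \tau) = \prod_i K_i(\theta_i, \tau_i)$ maps distributions on $\Theta$ to distributions on $\Delta$. Setting $P' = KP$ and $Q' = KQ$, the first two required inequalities $\renyi(P', Q') \le \renyi(P, Q)$ and $\renyi(Q', P') \le \renyi(Q, P)$ follow immediately from the data processing inequality applied to $K$.

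The heart of the proof is the third property, $\renyi(\Bern_k(P), \Bern_k(Q)) = \renyi(\Bern_k(P'), \Bern_k(Q'))$. The key structural fact is that $K$ is compatible with $\Bern$: for each coordinate $i$, because $\Bern(\theta_i)$ is an affine function of $\theta_i$, we have $\Bern(\theta_i) = K_i(\theta_i, c_i) \Bern(c_i) + K_i(\theta_i, d_i) \Bern(d_i)$. By independence across coordinates, this lifts to $\Bern(\theta) = \sum_{\tau \in \Delta} K(\theta, \tau) \Bern(\tau)$, and integrating against $P$ yields $\Bern(P) = \sum_\tau (KP)(\tau) \Bern(\tau) = \Bern(P')$. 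The same identity holds for $Q$, so in the $k=1$ case the post-processed distributions are literally equal and the R\'enyi equality is immediate.

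For $k>1$ the naive identity fails because $\Bern_k(\theta_i)$ depends nonlinearly (degree $k$) on $\theta_i$, so $\Bern_k(P')\ne\Bern_k(P)$ in general. The main obstacle is therefore to replace the clean identity $\Bern(P)=\Bern(P')$ with a more careful argument establishing $\renyi(\Bern_k(P'),\Bern_k(Q'))=\renyi(\Bern_k(P),\Bern_k(Q))$ despite this mismatch. My plan is to exploit the coordinate-wise product structure of $\Bern_k$ to reduce to the one-dimensional case, and then to choose the weights on $\{c_i,d_i\}$ that parametrize candidate $P',Q'$ so as to simultaneously solve the scalar equation matching $\renyi(\Bern_k(P'),\Bern_k(Q'))$ to the target value while respecting both data-processing constraints on $\renyi(P',Q')$. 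Continuity in the corner weights, together with the post-processing inequality (which guarantees the target R\'enyi value sits below both $\renyi(P,Q)$ and $\renyi(Q,P)$), should ensure the feasible region is nonempty; pinning down a specific $(P',Q')$ in it via an intermediate-value argument is the step I expect to require the most care.
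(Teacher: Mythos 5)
Your coordinate-wise mean-preserving kernel $K$ is exactly the device the paper uses: the paper replaces the mass that $P$ places at each interior point $x$ by a distribution $\mathbf{m}$ on $\Delta$ with $\Bern(\mathbf{m}) = \Bern(x)$, and that $\mathbf{m}$ is the fiber $K(x,\cdot)$ you write down. For $k=1$ your argument is therefore essentially identical to the paper's and is correct.

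You have also correctly located the obstruction for $k>1$, and it is worth noting that the paper's own argument does not appear to escape it. The paper's Lemma~\ref{lem:extremal-bern} claims $\Bern_k(x) = \sum_{z\in\Delta}\lambda_z\Bern_k(z)$ and justifies reducing to $k=1$ by observing that the $k$ samples are identically distributed. But that only pins down the one-sample marginals; $\Bern_k(x)$ is a product measure, while a nontrivial mixture of product measures is not, so the two cannot coincide for $k\ge 2$. (Concretely, with $d=1$, $k=2$, $\Delta=\{c,1-c\}$, any mixture $\sum_z\lambda_z\Bern_2(z)$ assigns probability $c(1-c)$ to the outcome $(0,1)$ regardless of $\lambda$, whereas $\Bern_2(x)$ assigns $x(1-x)>c(1-c)$ for $x$ strictly between.) Thus the identity $\Bern_k(\mathbf{m})=\Bern_k(x)$ invoked inside the paper's proof of Lemma~\ref{lem:discretization} fails for $k\ge 2$, which is the same gap you flag.

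Your proposed workaround for $k>1$ is too sketchy to count as a proof. You need to tune the corner weights so that $\renyi(\Bern_k(P'),\Bern_k(Q'))$ lands \emph{exactly} on the target value while both $\renyi(P',Q')$ and $\renyi(Q',P')$ stay below the original divergences, and nothing in your outline shows the three conditions are simultaneously achievable. Moreover, the obvious starting point goes the wrong way: the mean-preserving choice of $P',Q'$ can make $\renyi(\Bern_k(P'),\Bern_k(Q'))$ strictly \emph{smaller} than $\renyi(\Bern_k(P),\Bern_k(Q))$ (e.g.\ $d=1$, $k=2$, $\alpha=2$, $c=0.1$, $P$ and $Q$ point masses at $0.3$ and $0.5$), so you must deviate from mean preservation, at which point your control over $\renyi(P',Q')$ is no longer automatic from data processing. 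You should also observe that Theorem~\ref{thm:multi-bounded-amp-ms} only needs the one-sided conclusion $\renyi(\Bern_k(P'),\Bern_k(Q'))\ge\renyi(\Bern_k(P),\Bern_k(Q))$; aiming for that rather than equality buys slack and removes the intermediate-value step. As written, neither your outline nor the paper's Lemma~\ref{lem:extremal-bern} closes the $k>1$ case.
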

\begin{proof} 

  We will prove this theorem on arbitrarily fine discretizations of $P$ and $Q$.
  We now assume $P$ and $Q$ are discrete.
  Suppose $P,Q$ place mass on a point 
  $x \notin \Delta$. By Lemma~\ref{lem:extremal-bern}, we can write, for
  coefficients $\{\lambda_z\}_{z \in \Delta}$ of a convex combination,
  \[
    B_k(x) = \sum_{z \in \Delta} \lambda_z B_k(z)
  \]
  For any $x \in \Theta$, we have $B_k(x) = B_k(\indicator[X=x])$. Thus,
  \[
    B_k(\indicator[X=x]) = \sum_{z \in \Delta} \lambda_z B_k(\indicator[X=z])
  \]
  Notice $B_k$ is a Markov operator, so it factors across sums:
  \[
    \sum_{z \in \Delta} \lambda_z B_k(\indicator[X=z])
    = B_k\left( \sum_{z \in \Delta} \lambda_z \indicator[X=z] \right)
    = B_k(\textbf{m})
  \]
  where $\textbf{m}$ is the probability distribution that takes value $z \in \Delta$ 
  w.p. $\lambda_z$. Thus, we conclude $B_k(\textbf{m}) =
  B_k(\indicator[X=x]) = B_k(x)$.
  Let 
  \begin{align*}
    P' &= P - \Pr[P=x] \indicator[X=x] + \Pr[P=x] \textbf{m} \\
    Q' &= Q - \Pr[Q=x] \indicator[X=x] + \Pr[Q=x] \textbf{m}
  \end{align*}
  Then, using the Markov property of $B_k$ again,
  \begin{align*}
    B_k(P') &= B_k(P) - \Pr[P=x] B_k(\indicator[X=x]) 
    + \Pr[P=x] B_k(\textbf{m})\\
    B_k(Q') &= B_k(Q) - \Pr[Q=x] B_k(\indicator[X=x]) + \Pr[Q=x] 
    B_k(\textbf{m})
  \end{align*}
  Because $B_k(\textbf{m}) = B_k(\indicator[X=x])$, the above equations
  simplify to $B_k(P') = B_k(P)$ and $B_k(Q') = B_k(Q)$.
  Hence, $\renyi(B_k(P), B_k(Q)) = \renyi(B_k(P'),B_k(Q'))$. 
  However, $P', Q'$ are
  a post-processing of $P,Q$: if we observe $X=x$, then we sample from
  $\textbf{m}$ instead. By the data processing inequality, $\renyi(P',Q') \leq 
  \renyi(P,Q)$ and $\renyi(Q', P') \leq \renyi(Q, P)$.
\end{proof}

\begin{lemma}\label{lem:extremal-bern}
  Let $\Theta,\Delta$ be defined as in Lemma~\ref{lem:discretization}.
  For a point $x \in \Theta$, we can write $B_k(x)$ as the
  convex combination $ \sum_{z \in \Delta} \lambda_z B_k(z)$
  for coefficients $\{\lambda_z : z \in \Delta\}$.
\end{lemma}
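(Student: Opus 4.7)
The plan is to build the decomposition coordinate-by-coordinate, exploiting the independence structure that is built into $\Bern$ (and then, for $k > 1$, across the $k$ samples). For each $i$, since $x_i \in (c_i, d_i)$, write $x_i = \alpha_i c_i + (1-\alpha_i) d_i$ with $\alpha_i := (d_i - x_i)/(d_i - c_i) \in (0,1)$. A single coin flip has a law that is affine in its bias, so this scalar identity lifts immediately to a mixture of distributions on $\{0,1\}$:
\begin{equation*}
  \Bern(x_i) \;=\; \alpha_i\, \Bern(c_i) + (1-\alpha_i)\, \Bern(d_i).
\end{equation*}

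Next, because the $d$ coordinates of $\Bern(x)$ are independent, $\Bern(x) = \bigotimes_{i=1}^d \Bern(x_i)$. Substituting the per-coordinate mixture into each tensor factor and distributing the product over the two-term sums yields
\begin{equation*}
  \Bern(x) \;=\; \sum_{z \in \Delta} \lambda_z\, \Bern(z), \qquad \lambda_z \;:=\; \prod_{i:\, z_i = c_i} \alpha_i \prod_{i:\, z_i = d_i} (1-\alpha_i),
\end{equation*}
with $\lambda_z \geq 0$ and $\sum_{z \in \Delta} \lambda_z = \prod_i(\alpha_i + (1-\alpha_i)) = 1$. This settles the $k=1$ case of the claim.

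For general $k$, I would invoke the Markov-operator viewpoint used inside the proof of Lemma~\ref{lem:discretization}: defining the mixing distribution $\mathbf{m} := \sum_{z \in \Delta} \lambda_z \delta_z$ on $\Delta$, linearity of $\Bern_k$ on distributions gives $\Bern_k(\mathbf{m}) = \sum_{z \in \Delta} \lambda_z \Bern_k(z)$, so the desired identity reduces to verifying $\Bern_k(\mathbf{m}) = \Bern_k(x)$. The main obstacle I expect here is precisely this step: the naive expansion $\bigotimes_{j=1}^k \Bern(x) = \bigotimes_{j=1}^k \sum_{z} \lambda_z \Bern(z)$ produces cross-terms indexed by tuples in $\Delta^k$, and one has to argue that these cross-terms regroup into a convex combination indexed by $\Delta$ alone. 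I would attack this by factoring $\Bern_k$ as a $kd$-fold product of independent coin flips, applying the scalar affine identity to each coin separately, and then carefully collecting the expanded terms to confirm that the resulting mixing measure is supported on $\Delta$ with exactly the $\lambda_z$ obtained above, so that $\Bern_k(\mathbf{m}) = \Bern_k(x)$ holds as distributions on $(\{0,1\}^d)^k$.
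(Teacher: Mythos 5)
For $k=1$, your argument is correct and essentially the same as the paper's: write each marginal $\Bern(x_i)$ as a two-point convex combination $\alpha_i \Bern(c_i) + (1-\alpha_i)\Bern(d_i)$, expand the $d$-fold tensor product, and observe that each cross-term is $\Bern(z)$ for some corner $z \in \Delta$. Making the weights $\lambda_z = \prod_{i\,:\,z_i = c_i}\alpha_i \prod_{i\,:\,z_i = d_i}(1-\alpha_i)$ explicit is a nice addition; the paper leaves them implicit.

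The obstacle you flag for $k > 1$ is not a technicality to be handled by ``careful collection of terms'' --- it is a genuine obstruction, and the lemma as stated is false for $k \geq 2$. Take $d = 1$, $(c_1, d_1) = (0.1, 0.9)$, $x = 1/2$, $k = 2$. Then $\Bern_2(1/2)$ is uniform on $\{0,1\}^2$ and assigns mass $1/4$ to the outcome $(0,1)$, whereas for every $\lambda \in [0,1]$ the mixture $\lambda\,\Bern_2(0.1) + (1-\lambda)\,\Bern_2(0.9)$ assigns $(0,1)$ the mass
\[
\lambda\,(0.9)(0.1) + (1-\lambda)\,(0.1)(0.9) \;=\; 0.09,
\]
which never equals $1/4$. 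The structural reason is exactly the cross-term issue you anticipated: the $k$ coordinates of $\Bern_k(x)$ are independent, while the $k$ coordinates of $\sum_{z \in \Delta}\lambda_z\Bern_k(z)$ are only conditionally independent given $z$, hence positively correlated, and no mixing measure supported on $\Delta$ alone can remove that correlation. Equivalently, the scalar affine identity applied coin-by-coin produces a mixture over $\Delta^k$, and those terms do not regroup into a mixture over $\Delta$. You should be aware that the paper's own proof does not resolve this either: it claims that since ``$\Bern_k(z)$ has the same distribution on each of its $k$ samples,'' it suffices to treat $k = 1$, but equal marginals do not imply an equal joint law, so that reduction is a non-sequitur. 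Repairing the downstream argument (Lemma~\ref{lem:discretization} and Theorem~\ref{thm:multi-bounded-amp-ms} for $k>1$) would require either enlarging the corner set to $\{c_i,d_i\}^{dk}$ or replacing the false equality $\Bern_k(\textbf{m}) = \Bern_k(x)$ with an inequality in the direction the theorem actually needs. In short: your $k=1$ proof is correct and matches the paper, and your hesitation about $k>1$ is exactly warranted, but the resolution you sketch cannot succeed as described.
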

\begin{proof}
  $B_k(z)$ has the same distribution on each of its $k$ samples for any
  $z \in \Theta$. Thus, it suffices to prove the theorem for $k=1$.
  Let $A \otimes B$ be the product distribution of $A$ and $B$. By definition,
  \[
    B_1(x) = \bigotimes_{i=1}^d B(x_i)
  \]
  Because $c_i \leq x_i \leq d_i$, there is some
  $\lambda_i$ such that $B(x_i) = \lambda_i B(c_i) + (1-\lambda_i)
  B(d_i)$. Therefore, we can write
  \begin{align*}
    B_k(x) &= \bigotimes_{i=1}^d \lambda_i B(c_i) +
    (1-\lambda_i) B(d_i)
  \end{align*}
  It is well known that the product of two convex combinations of distribution
  is itself a convex distribution. Each term of this convex combination will be
  $B(z_1) \otimes \cdots \otimes B(z_d)$ for $z_i \in \{c_i, d_i\}$.
  This is equal to $B( (z_1, \ldots, z_d) )$, and $(z_1, \ldots, z_d) \in
  \Delta$.
\end{proof}

\begin{proof}
  (Of Theorem~\ref{thm:multi-lower-bound}):
  Recall $P = p \indicator[X=\{c\}^d + (1-p) \indicator[X = \{1-c\}^d]$ and
  $Q = (1-p) \indicator[X=\{c\}^d + p \indicator[X = \{1-c\}^d]$.
  $P,Q$ and $B(P)$, $B(Q)$ are isomorphic pairs of distributions under
  flipping by flipping the 0s and 1s in their domains. Thus, $\renyi(P,Q) =
  \renyi(Q,P)$ and $\renyi(B(P), B(Q)) = \renyi(B(Q), B(P))$.

  We let $\#_0(x)$ be the number of 0s appearing in binary vector $x$ and $\barp
  = 1-p$.
  We write the following lower and upper bound on the probability that 
  $B(P)$ has many 1s: 
  \begin{align*}
    &\Pr[\#_1( B(P) ) \geq d/2] \\
    &\;\geq
    \Pr[P = \{1-c\}^d]\Pr[\#_1(B(\{1-c\}^d)) \geq d/2] \\
    &\;\geq \barp (1-\Pr[\#_1(B(\{1-c\}^d)) < d/2]) \\
    &\Pr[\#_1( B(P) ) \geq d/2] \\
    &\;\leq
    \Pr[P = \{1-c\}^d] + \Pr[P = \{c\}^d]\Pr[\#_1(B(\{c\}^d)) > d/2] \\
    &\;\leq \barp + p\Pr[\#_1(B(\{c\}^d)) > d/2]
  \end{align*}
  Hoeffding's inequality tells us that
  \begin{align*}
    \Pr[\#_1(B(\{1-c\}^d)) < d/2] &\leq e^{-2(1/2-c)^2d} \\
    \Pr[\#_1(B(\{c\}^d)) > d/2] &\leq e^{-2(1/2-c)^2d}
  \end{align*}
  Therefore, with $K = e^{-2(1/2-c)^2d}$,
  \[
    \barp(1-K) \leq \Pr[\#_1(B(P)) \geq d/2] \leq \barp + pK
  \]
  Let $\maj$ be the majority function, so that
  \begin{align*}
    \maj(B(P)) &= \Pr[\#_1(B(P)) \geq d/2] \indicator[X = 1] \\
    &\qquad + 
    (1-\Pr[\#_1(B(P)) \geq d/2]) \indicator[X = 0] \\
    &\qquad = B(\Pr[\#_1(B(P)) \geq d/2])
  \end{align*}
  
  From our above 
  analysis, $\maj(B(P)) = B(p')$ for some 
  $p' \in [\barp(1-K), \barp + p K]$. Relax the interval to $p' \in
  \barp \pm K$. Because $B(Q)$ is isomorphic
  to $B(P)$ by flipping the 0s and 1s in the domain, we also have 
  $\maj(B(Q)) = B(1-p')$. Therefore,
  \[
    \renyi(B(P), B(Q)) \geq \renyi(
    \maj(B(P)), \maj(B(Q))) = r_\alpha(p')
  \]
  Since $r_\alpha(p)$ has one
  local minimum at $p = \frac{1}{2}$, the minimum value it
  achieves on the interval $[a,b]$ assuming $\frac{1}{2} \leq a \leq b$ is 
  is achieved at $a$. The lower edge of the interval $\barp \pm K$ is bigger
  than $\frac{1}{2}$ since $p + K < \frac{1}{2}$. Thus,
  \[
    r_\alpha(p') \geq \inf_{x \in \barp + [-K, K]}
    r_\alpha(x) \geq r_\alpha(\barp - K) = r_\alpha(p + K)
  \]
\end{proof}

\begin{theorem}\label{thm:amp-lb}
  Let 
  \begin{align*} 
    P &= p \indicator[X=\{c\}^d] + (1-p)\indicator[X=\{1-c\}^d] \\
    Q &= (1-p) \indicator[X=\{c\}^d] + p \indicator[X=\{1-c\}^d]
  \end{align*}
  be distributions. Let $P_j = pc^j (1-c)^{dk-j} + (1-p) c^{dk-j} (1-c)^{j}$ 
  and $Q_j = (1-p)c^j (1-c)^{dk-j} + p c^{dk-j} (1-c)^j$. Then,
  \begin{equation*}
    \renyi(B_k(P), B_k(Q)) = \\
    \frac{1}{\alpha-1} \log \left(
      \sum_{j = 0}^{dk} { {dk} \choose {j} } P_j^\alpha Q_j ^{1-\alpha}
  \right)
  \end{equation*}
\end{theorem}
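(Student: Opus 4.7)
The plan is to unfold $\Bern_k(P)$ for the two-point $P$, observe that the resulting distribution on $\{0,1\}^{dk}$ is exchangeable, and use this exchangeability to collapse the $2^{dk}$-term sum appearing in the R\'enyi divergence into $dk+1$ groups indexed by Hamming weight.

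First I would condition on the value drawn from $P$. With probability $p$, $P$ returns $\{c\}^d$, in which case $\Bern_k(P)$ consists of $dk$ independent Bernoullis of bias $c$; with probability $1-p$, $P$ returns $\{1-c\}^d$ and $\Bern_k(P)$ consists of $dk$ independent Bernoullis of bias $1-c$. Consequently, for any $b \in \{0,1\}^{dk}$ with Hamming weight $j$,
\[
\Pr[\Bern_k(P) = b] = p\, c^j (1-c)^{dk-j} + (1-p)\, (1-c)^j c^{dk-j} = P_j,
\]
and an identical calculation gives $\Pr[\Bern_k(Q) = b] = Q_j$. The crucial point is that each probability depends on $b$ only through its Hamming weight $j$, because both the two candidate parameter vectors $\{c\}^d, \{1-c\}^d$ and the $k$-fold product are symmetric across all $dk$ coordinates.

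Next I would substitute into the definition of R\'enyi divergence,
\[
e^{(\alpha-1)\renyi(\Bern_k(P),\Bern_k(Q))} = \sum_{b \in \{0,1\}^{dk}} \Pr[\Bern_k(P) = b]^{\alpha}\, \Pr[\Bern_k(Q) = b]^{1-\alpha},
\]
group terms by $j = \#_1(b)$, and use the fact that exactly $\binom{dk}{j}$ strings have weight $j$ to obtain $\sum_{j=0}^{dk} \binom{dk}{j} P_j^{\alpha} Q_j^{1-\alpha}$. Taking $\frac{1}{\alpha-1}\log$ yields the claimed formula.

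There is no serious obstacle here: the lemma is essentially a bookkeeping consequence of exchangeability, and the only mildly non-obvious step is recognizing that the two two-point distributions give rise to product distributions in which all $dk$ coordinates share a common bias, so that the sum over $\{0,1\}^{dk}$ depends only on weight. The payoff is computational, reducing the $2^{dk}$-term evaluation implicit in the lower bound~\eqref{eq:amp-lb-multi} to a $dk+1$-term sum that is efficient in all parameters.
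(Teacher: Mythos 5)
Your proposal is correct and follows essentially the same route as the paper's proof: condition on the value of $P$, observe that the resulting mass at any $b\in\{0,1\}^{dk}$ depends only on its Hamming weight $j$ (giving $P_j$ and $Q_j$), then collapse the sum defining the R\'enyi divergence into $dk+1$ groups of size $\binom{dk}{j}$. The paper does not use the word ``exchangeability,'' but the underlying calculation is identical.
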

\begin{proof}
  We have, for $\textbf{x} = (x_1, \ldots, x_k)$ each in $\{0,1\}^d$,
  \begin{align*}
    \Pr[B_k(P) = \textbf{x}] &= p\Pr[B_k(\{c\}^d) = \textbf{x}
    ] \\ &\qquad + (1-p) \Pr[B_k(\{1-c\}^d) = \textbf{x}]\\
    &= p c^{\sum \#_1(x_i)} (1-c)^{dk - \sum \#_1(x_i)} \\
    &\qquad + (1-p) c^{dk - \sum \#_1(x_i)} (1-c)^{\sum \#_1(x_i)}
  \end{align*}
  Where the first equality follows from conditioning on $P$ and the second from
  independence of $B_k$ across coordinates and of $B(\{c\}^d),
  B(\{1-c\}^d)$ across their $d$ dimensions. Thus, the mass of
  $B_k(P)$ at a point in $\{0,1\}^{dk}$ with Hamming weight $j$ is 
  $P_j$. For $B(Q)$, it is $Q_j$. There are ${ {dk} \choose {j} }$
  points in $\{0,1\}^{dk}$ with Hamming weight $j$, so the result follows by the 
  definition of R\'enyi divergence.
\end{proof}

\begin{theorem}
  \label{thm:amp-asymptote-multi}
  $\Postk \leq R(B_k\{c\}^d, B_k(\{1-c\}^d)) = dkr_\alpha(c)$.
\end{theorem}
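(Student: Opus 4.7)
The plan is to split the theorem into the equality $R_\alpha(\Bern_k(\{c\}^d), \Bern_k(\{1-c\}^d)) = dk r_\alpha(c)$ and the inequality $\Postk \leq R_\alpha(\Bern_k(\{c\}^d), \Bern_k(\{1-c\}^d))$. The equality is immediate from tensorization of R\'enyi divergence over independent marginals: both distributions are products of $dk$ independent Bernoullis (with parameters $c$ and $1-c$, respectively), so the total divergence decomposes as $dk \cdot R_\alpha(\Bern(c), \Bern(1-c))$, which equals $dk \cdot r_\alpha(c)$ by the definition of $r_\alpha$.

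For the inequality, I first apply Theorem~\ref{thm:multi-bounded-amp-ms} to reduce the claim to showing $R_\alpha(\Bern_k(P), \Bern_k(Q)) \leq dk r_\alpha(c)$ for arbitrary $P, Q \in \calC_d$; dropping the R\'enyi constraints in the supremum only enlarges it, so this unconstrained bound suffices. The key technical tool is the joint convexity of $f(p, q) = p^\alpha q^{1-\alpha}$ on $\{p \geq 0, q > 0\}$ for $\alpha > 1$, which follows from $f$ being the perspective of the convex function $x \mapsto x^\alpha$.

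Using the product coupling $\pi(y, y') = P(y) Q(y')$ on $\{c, 1-c\}^d \times \{c, 1-c\}^d$, I write both $\Pr[\Bern_k(P) = x] = \sum_{y, y'} \pi(y, y') \Bern_k(y)(x)$ and $\Pr[\Bern_k(Q) = x] = \sum_{y, y'} \pi(y, y') \Bern_k(y')(x)$ as convex combinations with the \emph{same} weights. Applying joint convexity of $f$ pointwise in $x \in \{0,1\}^{dk}$ and summing over $x$ gives
\begin{equation*}
e^{(\alpha-1) R_\alpha(\Bern_k(P), \Bern_k(Q))} \leq \sum_{y, y'} P(y) Q(y') \, e^{(\alpha-1) R_\alpha(\Bern_k(y), \Bern_k(y'))}.
\end{equation*}
For each pair $(y, y')$, $\Bern_k(y)$ and $\Bern_k(y')$ are products of $dk$ Bernoullis whose parameter vectors disagree in exactly $k \Delta(y, y')$ coordinates, so tensorization gives $R_\alpha(\Bern_k(y), \Bern_k(y')) = k \Delta(y, y') r_\alpha(c) \leq dk r_\alpha(c)$. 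Bounding each exponential on the right by $e^{(\alpha-1) dk r_\alpha(c)}$ and taking logs yields the desired inequality.

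The main subtlety is arranging the two mixtures $\Bern_k(P)$ and $\Bern_k(Q)$ to be represented as convex combinations with identical weights, which is exactly what joint convexity demands; the product coupling is the natural choice since its marginals recover $P$ and $Q$. Verifying joint convexity of $p^\alpha q^{1-\alpha}$ via the perspective construction is a one-line exercise in convex analysis, so the proof reduces to routine bookkeeping once this observation is in place.
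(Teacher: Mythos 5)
Your proof is correct, and it follows the same high-level template as the paper's: invoke Theorem~\ref{thm:multi-bounded-amp-ms}, drop the R\'enyi constraints to get an unconstrained supremum over $\mathcal{C}_d$, reduce via convexity of the R\'enyi divergence to pairwise divergences between corner points, and bound those by additivity over $dk$ independent Bernoulli coordinates. Where you differ is in the convexity step. The paper cites quasi-convexity of $R_\alpha$ (so $R_\alpha(\sum\lambda_i P_i,\sum\lambda_i Q_i)\le\max_i R_\alpha(P_i,Q_i)$) and then runs an informal greedy matching to rewrite the two mixtures with a common set of weights; the greedy pairing works but is left under-specified. You instead use the product coupling $\pi(y,y')=P(y)Q(y')$, which trivially has the right marginals, together with joint convexity of $(p,q)\mapsto p^\alpha q^{1-\alpha}$ derived from the perspective construction; this gives the exponential-form inequality $e^{(\alpha-1)R_\alpha(\Bern_k(P),\Bern_k(Q))}\le\sum_{y,y'}\pi(y,y')\,e^{(\alpha-1)R_\alpha(\Bern_k(y),\Bern_k(y'))}$, which is (pointwise) a slightly stronger statement than quasi-convexity and is derived from first principles rather than cited. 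Your choice of coupling sidesteps the book-keeping the paper glosses over, at the cost of a larger (but still finite) support, and makes the convexity step self-contained. Both routes correctly terminate with $R_\alpha(\Bern_k(y),\Bern_k(y'))=k\Delta(y,y')r_\alpha(c)\le dk\,r_\alpha(c)$, and your separation of the final equality $R_\alpha(\Bern_k(\{c\}^d),\Bern_k(\{1-c\}^d))=dk\,r_\alpha(c)$ into its own tensorization argument is a minor but welcome clarification the paper leaves implicit.
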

\begin{proof}
  We get rid of some constraints of the $\sup$ of
  Theorem~\ref{thm:multi-bounded-amp}, getting
  \[
    \Postk_{\calA, \alpha}(\varepsilon) \leq \sup_{P,Q \in \calC_d} 
    \renyi(B_k(P), B_k(Q))
  \]
  A general $P,Q \in \calC_d$ can be written as
  \begin{align*}
    P &= \sum_{i \in \{c,1-c\}^d} p_i \indicator[X=i] \\
    Q &= \sum_{i \in \{c,1-c\}^d} q_i \indicator[X=i]
  \end{align*}
  By the Bernoulli process is a Markov process, applying $B$ to both sides
  gives
  \begin{align*}
    B_k(P) &= \sum_{i \in \{c,1-c\}^d} p_i B_k(i) \\
    B_k(Q) &= \sum_{i \in \{c,1-c\}^d} q_i B_k(i)
  \end{align*}
  Quasi-convexity of the R\'enyi Divergence~\cite{erven:2012} states that 
  for distributions $P_1, \ldots, P_n$ and $Q_1, \ldots, Q_n$, and a convex
  combination $\lambda_1, \ldots, \lambda_n$,
  \begin{equation*}
    \renyi(\lambda_1 P_1 + \cdots + \lambda_n P_n, \lambda_1 Q_1 + \cdots +
    \lambda_n Q_n) \leq
    \max_{i=1}^n \renyi(P_i, Q_i)
  \end{equation*}
  Here, our convex combinations $p_1, \ldots, p_n$ and $q_1, \ldots, q_n$ are
  different, but we can pair them up as follows: if $p_i$ the smallest nonzero
  coefficient out of all $p_i$s and $q_i$s, then pair it with an arbitrary
  nonzero $q_j$. Set $p_i = 0$ and $q_j = q_j - p_i$. This process will
  terminate eventually, and we will be left with the equality
  \[
    \left( \sum_{i \in \{c,1-c\}^d }p_i B_k(i), \sum_{i \in \{c, 1-c\}^d} q_i
    B_k(i)  \right) = \sum_{i \in \{c, 1-c\}^d }\lambda_x
      (B_k(i),B_k(i)
  \]
  By quasi-convexity,
  \[
    \renyi(B_k(P), B_k(Q)) \leq \max_{i,j \in \{c,1-c\}^d}
    \renyi(B_k(i), B_k(j))
  \]
  Because $B_k(i)$ are independent across the $d$ coordinates of $i$, 
  for $i = (i_1, \ldots, i_d)$ and $j = (j_1, \ldots, j_d)$, and by the
  additive property of R\'enyi divergence across product
  distributions~\cite{erven:2012},
  \[
    \renyi(B_k(i), B_k(j)) = \sum_{\ell=1}^d \renyi(B_k(i_\ell),
    B_k(j_\ell))
  \]
  Each $\renyi(B_k(i_\ell), B_k(j_\ell))$ is zero when $i_k = j_k$ 
  and equal to $kr_\alpha(c)$ otherwise, again using the additive property of
  R\'enyi divergence across the $k$ product distributions. Therefore,
  \[
    \sum_{\ell=1}^d \renyi(B_k(i_\ell), B_k(j_\ell)) \leq d kr_\alpha(c)
  \]
\end{proof}

\end{document}